\theoremstyle{plain}
\newtheorem{theorem}{Theorem}[section]
\newtheorem{proposition}[theorem]{Proposition}
\theoremstyle{definition}
\theoremstyle{remark}
\newcommand{\E}{\mathbb{E}} 
\newcommand{\W}{{\bm W}}
\newcommand{\D}{{\bm D}}
 \newcommand{\z}{{\bm z}}
 \newcommand{\x}{{\bm x}}
\newcommand{\res}{{\bm r}}
\newcommand{\smlb}{{\bm b}}
\DeclareMathOperator*{\argmax}{arg\,max}
\icmltitlerunning{Evaluating Sparse Autoencoders: From Shallow Design to Matching Pursuit}
\begin{document}

\onecolumn
\icmltitle{Evaluating Sparse Autoencoders: From Shallow Design to Matching Pursuit}



\icmlsetsymbol{equal}{*}
\icmlsetsymbol{senior}{$\dagger$}

\begin{icmlauthorlist}
\icmlauthor{Valérie Costa}{a1,equal}
\icmlauthor{Thomas Fel}{a2,a3}
\icmlauthor{Ekdeep Singh Lubana}{a2,a5}
\icmlauthor{Bahareh Tolooshams}{a4,a6,a7,senior}
\icmlauthor{Demba Ba}{a2,a3,senior}
\end{icmlauthorlist}

\icmlaffiliation{a1}{École Polytechnique Fédérale de Lausanne}
\icmlaffiliation{a2}{Harvard University}
\icmlaffiliation{a3}{Kempner Institute}
\icmlaffiliation{a4}{Department of Electrical and Computer Engineering, University of Alberta}
\icmlaffiliation{a5}{CBS-NTT Program in Physics of Intelligence, Harvard University}
\icmlaffiliation{a6}{Alberta Machine Intelligence Institute (Amii)}
\icmlaffiliation{a7}{Neuroscience and Mental Health Institute, University of Alberta}

\icmlcorrespondingauthor{Demba Ba}{demba@seas.harvard.edu}

\icmlkeywords{Machine Learning, ICML}

\vskip 0.15in


\printAffiliationsAndNotice{}  

\begin{abstract}

Sparse autoencoders (SAEs) have recently become central tools for interpretability, leveraging dictionary learning principles to extract sparse, interpretable features from neural representations whose underlying structure is typically unknown. This paper evaluates SAEs in a controlled setting using MNIST, which reveals that current shallow architectures implicitly rely on a quasi-orthogonality assumption that limits the ability to extract correlated features. To move beyond this, we compare them with an iterative SAE that unrolls Matching Pursuit (MP-SAE), enabling the residual-guided extraction of correlated features that arise in hierarchical settings such as handwritten digit generation while guaranteeing monotonic improvement of the reconstruction as more atoms are selected.

\end{abstract}

\section{Introduction}\label{introduction}

Sparse dictionary learning \cite{mairal_online_2009,rubinstein_dictionaries_2010,tosic_dictionary_2011} aims to represent data $\x^i$ as sparse linear combinations of learned basis vectors ($\D$, a.k.a atoms), for $i=1,\dots,n$, i.e., 
\begin{equation}
    \x^i \approx \D \z^i \quad \text{subject to} \quad \| \z^i \|_0 \leq k, \quad \forall\ i = 1,\dots,n.\label{eq:dictlearning}
\end{equation}
where the constraint ensures that the sparse code $\z^i$ is at most $k$-sparse. Sparse representations are ubiquitous in science and engineering, with applications in medical imaging~\cite{lustig_sparse_2007, hamalainen_sparse_2013}, image restorations~\cite{mairal2007sparse, dong_nonlocally_2013}, transcriptomics~\cite{cleary2021compressed}, and genomics~\cite{lucas2006sparse} with origins from computational neuroscience~\cite{olshausen1997sparse,olshausen1996emergence}. The formulation in \eqref{eq:dictlearning} leads to a bi-level optimization problem: the \emph{inner} problem performs sparse approximation to estimate the code $\z^i$ given the dictionary $\D$, while the \emph{outer} problem updates the dictionary based on the current code estimates to better represent the data~\cite{tolooshams2023deep}.

Solving this bi-level optimization can be achieved via alternating minimization~\cite{agarwal2016learning}, alternating between the inner and outer problems until a convergence criterion. The inner problem has been extensively studied in the compressed sensing literature~\cite{donoho2006compressed, candes2006robust}. Classical approaches include greedy $\ell_0$-based algorithms~\cite{tropp_greed_2004} such as Matching Pursuit~\cite{MallatMP} and Orthogonal Matching Pursuit~\cite{pati1993omp}, as well as convex relaxation $\ell_1$-based methods~\cite{chen2001atomic, tibshirani1996lasso}. Since sparse recovery lacks a closed-form solution~\cite{natarajan_sparse_1995}, the sparse approximation step typically requires multiple residual-based iterations to converge.

Prior work solves dictionary learning by optimizing the outer problem using closed-form least squares~\cite{agarwal2016learning}, local gradient updates~\cite{chatterji2017alternating}, or sequential methods like MOD~\cite{engan1999mod} and K-SVD~\cite{aharon2006ksvd}. A key bottleneck lies in the repeated solution of the inner sparse coding problem, which typically converges sublinearly~\cite{beck2009fast, moreau_understanding_2017}. To address this, the unrolling literature proposes turning the iterative solver into a neural network, enabling fixed-complexity approximations. This idea, sparked by LISTA~\cite{gregor2010lista}, has been shown to achieve linear convergence~\cite{chen2018unfoldista, ablin2019stepsize} and accelerate both inference and learning~\cite{tolooshams2022stable, malezieux2022understanding}. Unrolling further allows the inner and outer optimization to be directly mapped to forward inference and backward learning in deep networks~\cite{tolooshams2020deep}.

Sparsity has been established as a useful prior for interpretability~\cite{mairal_sparse_2014, lipton_mythos_2017, ribeiro_why_2016}. Building on this principle, recent work has proposed the use of sparse autoencoders (SAEs) to extract human-interpretable features from the internal activations of large language models (LLMs)~\cite{elhage2022toymodelssuperposition, cunningham_sparse_2023, bricken2023monosemanticity, rajamanoharan2024jumping, gao2025scaling}. These methods are motivated by the linear representation hypothesis ~\cite{arora_latent_2016, park_linear_2024,elhage2022toymodelssuperposition}, which posits that internal representations can be modeled as sparse linear combinations of semantic directions. Unlike the classical dictionary learning methods described above, SAEs solve the inner problem in a single iteration. 

\begin{wrapfigure}[33]{r}{0.38\textwidth}
\hspace{-12pt}
    \begin{minipage}{0.40\textwidth}
    \vspace{-35pt}
        \begin{algorithm}[H]
        \caption{Matching Pursuit Sparse Autoencoders (MP-SAE)}
        \label{alg:mpsae}
        \begin{algorithmic}
        \STATE \textbf{Input:} Dictionary $\D$, bias $\bm b_{\text{pre}}$, data $\bm x$, steps $T$
        \STATE Initialize residual: $\bm r^{(0)} = \bm x - \bm b_{\text{pre}}$
        \STATE Initialize reconstruction: $\hat{\bm x}^{(0)} = \bm b_{\text{pre}}$
        \STATE Initialize sparse code: $\bm z^{(0)} = \bm 0$
        \FOR{$t = 1, \ldots, T$}
            \STATE $j^{(t)} = \argmax_{j=\{1,\ldots,p\}} (\bm \D^\top\bm r^{(t-1)})_j $
            \STATE $\z^{(t)}_{j^{(t)}} = \D_{j^{(t)}}^\top \res^{(t-1)}$
            \STATE $\hat{\bm x}^{(t)} = \hat{\bm x}^{(t-1)} + \z^{(t)}_{j^{(t)}} \D_{j^{(t)}}$
            \STATE $\res^{(t)} = \res^{(t-1)} - \z^{(t)}_{j^{(t)}} \D_{j^{(t)}}$
        \ENDFOR
        \STATE \textbf{Output:} Sparse code $\bm z=\sum_{t=1}^T \z^{(t)}$ \\
        \vspace{1mm}
        \hspace{1.3cm}Reconstruction $\hat{\bm x} = \hat{\bm x}^{(T)}$
        \end{algorithmic}
        \end{algorithm}
        \vspace{-3mm}
        \centering
        \includegraphics[width=0.97\linewidth]{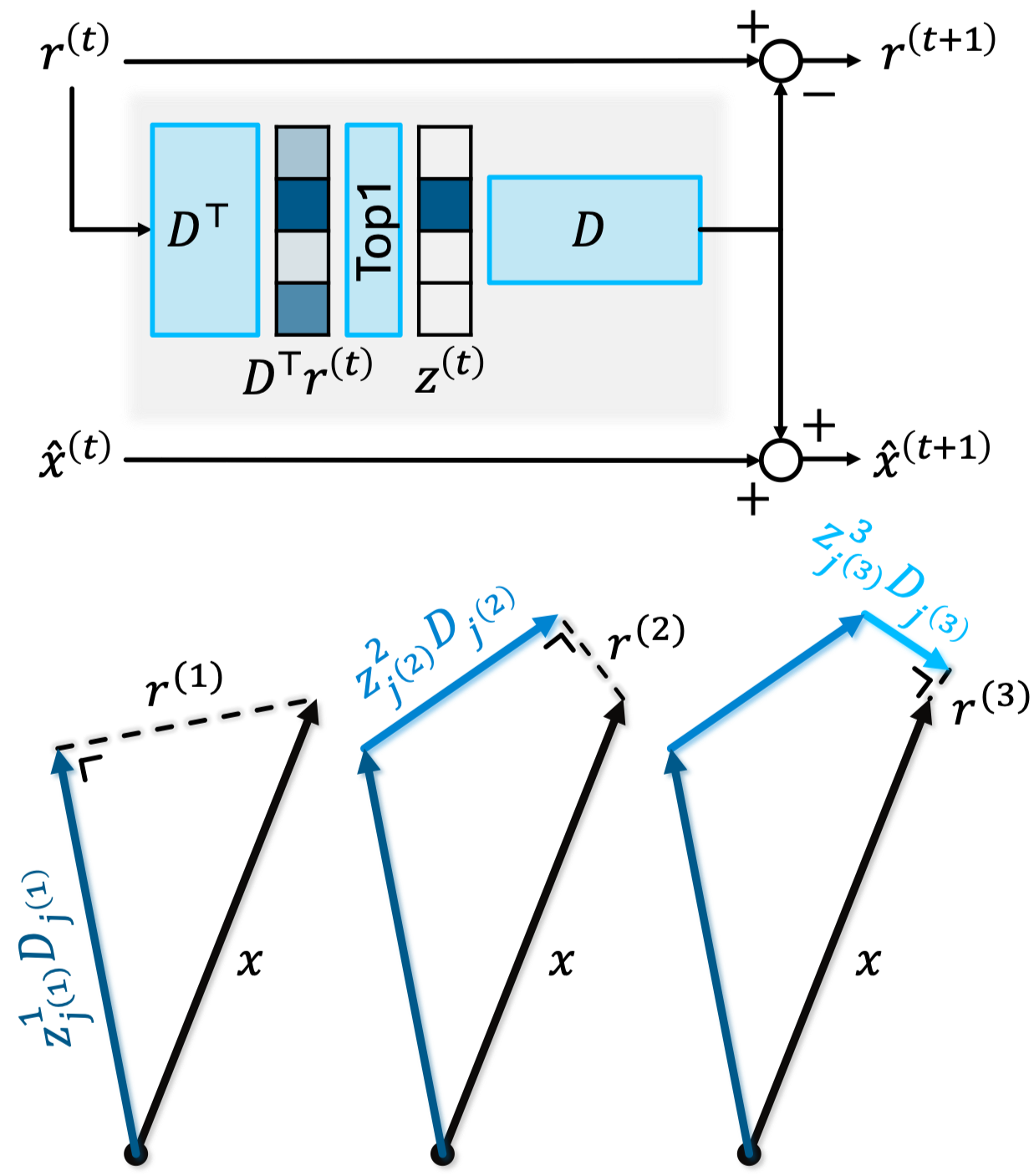}
        \captionof{figure}{\textbf{MP-SAE.} 
        Top : Full forward pass. Bottom: One encoder iteration.}
        \label{fig:mp}
    \end{minipage}
\end{wrapfigure}

Although SAEs are widely used for model interpretability~\cite{kim2018interpretability, cunningham_sparse_2023}, they have rarely been evaluated in small-scale, controlled settings,  despite their strong ties to classical dictionary learning.  Among the few studies, \cite{hindupur2025projecting} demonstrate that SAEs impose structural assumptions that shape what they can and cannot detect.

\textbf{Our Contributions}\quad In this work, we revisit the relationship between SAEs and dictionary learning by testing common interpretability SAEs architectures on MNIST, a controlled and well-established dataset in sparse coding literature. Despite similarities in architecture, we find that SAEs with different sparsity mechanisms yield structurally distinct dictionaries. These differences may have important implications for interpretability.

We demonstrate that shallow SAEs implicitly favor near-orthogonal dictionaries, due to their one-shot sparse inference. To investigate this further, we introduce MP-SAE, which unrolls Matching Pursuit into a sequential, residual-guided inference process that operates effectively in regimes with highly correlated features and is supported by convergence guarantees. MP-SAE learns a globally correlated set of atoms but uses a low-redundancy subset to represent each input, a property missing in current shallow SAEs.

Compared to shallow SAEs, MP-SAE yields more expressive representations and naturally constructs a representational hierarchy—first selecting atoms that capture coarse structure, then adding finer details. This coarse-to-fine behavior may lead to more interpretable representations, as it mirrors the hierarchical organization of real-world features~\cite{bussmann2025learning}.

\section{Background}

\textbf{Representation Hypotheses}\quad Efforts to understand neural representations through interpretability are often motivated by two guiding hypotheses: the \textit{Linear Representation Hypothesis} and the \textit{Superposition Hypothesis}~\cite{arora_latent_2016, olah_zoom_2020, park_linear_2024, elhage2022toymodelssuperposition}. These suggest that internal activations of large deep neural networks can be expressed as linear combinations of human-interpretable concept directions $\D$. In practice, the number of possible concepts \(p\) far exceeds the dimensionality \(m\) of the representation: $m \ll p$, leading to superposition—multiple concepts overlapping within the same activation~\cite{elhage2022toymodelssuperposition}. Despite this, meaningful disentanglement is possible under the sparsity assumption: $\|\bm{z}\|_0 \leq k$, where only a small number \(k \ll p\) of concepts are active~\cite{donoho2006compressed}.

\textbf{Sparse Concept Extraction as Dictionary Learning}\quad As formalized in~\cite{fel_holistic_2023}, the task of concept extraction in interpretability can be cast as a dictionary learning problem: learn a set of interpretable directions \(\D\) such that activations \(\bm{x}\) can be approximated by sparse linear combinations with sparse code \(\bm{z}\) (See equation \ref{eq:dictlearning}). In practice, this is most often implemented using shallow SAEs, which have been shown to extract meaningful and monosemantic concepts across a variety of architectures and domains~\cite{elhage2022toymodelssuperposition, cunningham_sparse_2023, bricken2023monosemanticity}.

\textbf{Sparse Autoencoders}\quad Given an input \(\bm{x} \in \mathbb{R}^m\), an SAE computes a sparse code using an encoder $\bm{z} = \sigma(\W^\top (\bm{x} - \bm{b}_{\text{pre}}) + \bm{b})$ and reconstructs the data as $\hat{\bm{x}} = \D \bm{z} + \bm{b}_{\text{pre}}$, where $\W \in \mathbb{R}^{m \times p}$ and $\bm{b} \in \mathbb{R}^p$ are the encoder parameters, $\bm{b}_{\text{pre}} \in \mathbb{R}^m$ is a bias, and $\D \in \mathbb{R}^{m \times p}$ is the learned dictionary of interest with normalized atoms (i.e., $\|\D_{i}\|_2=1$). The nonlinearity \(\sigma(\cdot)\) enforces sparsity; common choices include ReLU, TopK, and JumpReLU. These models are trained to minimize a sparsity-augmented reconstruction loss: $\mathcal{L} = \|\bm{x} - \hat{\bm{x}}\|_2^2 + \lambda\, \mathcal{R}(\bm{z}) + \alpha\, \mathcal{L}_{\text{aux}}$, where $\mathcal{R}(\bm{z})$ promotes sparsity, e.g., via $\ell_1$ regularization for ReLU~\cite{cunningham_sparse_2023, bricken2023monosemanticity} or target-$\ell_0$ penalties~\cite{rajamanoharan2024jumping}.

\textbf{Orthogonality and Limitations of Shallow Recovery}\quad Sparse recovery (i.e., the inner problem optimization) in one iteration is theoretically guaranteed only when the dictionary \(\D\) is sufficiently incoherent, i.e., when the mutual coherence \(\mu(\D) = \max_{i \neq j} |\D_i^\top \D_j|\) is small~\cite{makhzani2014ksae, arora2015sparsecoding}. However, concepts underlying natural data may exhibit high coherence. This limitation motivates the use of unrolled sparse coding networks, which address the inner problem through multiple iterative updates, progressively refining the estimate of the sparse code $z$ as in classical sparse recovery methods. Shallow SAEs can be seen as approximating unrolled networks by performing only a single iteration: they rely on the same nonlinearities -- JumpReLU implementing hard thresholding as in NOODL~\cite{rambhatla2018noodl}, and ReLU implementing soft thresholding as in PUDLE~\cite{tolooshams2022stable} -- but do not perform iterative updates.

\section{Unrolling Matching Pursuit into MP-SAE}

We propose the \textit{Matching Pursuit Sparse Autoencoder} (MP-SAE), which unrolls Matching Pursuit (MP)~\cite{MallatMP} into an iterative sparse autoencoder architecture. Previously used for image restoration tasks under the name Learned-MP~\cite{khatib2021learned}, our work focuses on the dictionary properties learned by MP-SAE and compares them to those obtained with shallow SAEs. 

\textbf{Iterative Inference via Residual Updates} \quad MP-SAE can be seen as an unrolled variant of TopK. Instead of selecting $k$ atoms in a single iteration, as in TopK, it selects one atom at each of the $k$ iterations. This allows the model to subtract the contribution of each selected atom from the input before the next selection, so that subsequent atoms are chosen to explain what remains unexplained. This unexplained part is captured by the residual, defined as the difference between the input $\bm{x}$ and its current reconstruction $\hat{\bm{x}}$, i.e., $\bm{r} = \bm{x} - \hat{\bm{x}}$. Initially, the residual is equal to the input (or $\bm{x} - \bm{b}_{\text{pre}}$).

\begin{wrapfigure}[24]{r}{0.39\textwidth}
    \vspace{-2mm}
    \hspace{-5mm}
    \centering
    \includegraphics[width=1.06\linewidth]{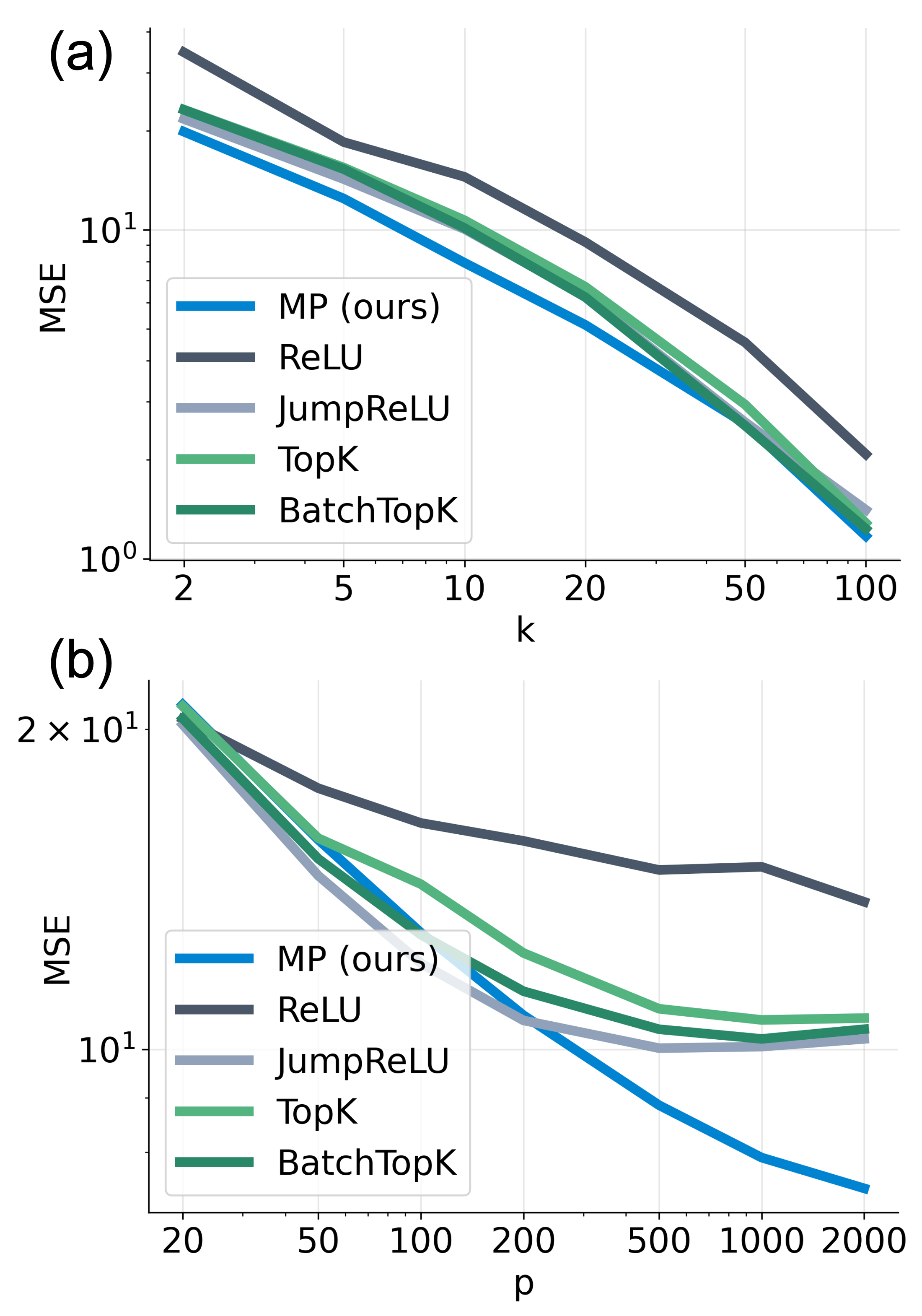}
    \vspace{-6mm}
    \caption{\textbf{Expressivity.}}
    \label{mnist_rec}
\end{wrapfigure}

At each iteration, MP greedily selects the dictionary atom that best aligns with the current residual. This is done by computing the inner product between the residual and each atom, and selecting the one with the highest projection. Once the best-matching atom is selected, the algorithm projects the residual onto that atom to determine its contribution to the reconstruction. This contribution is then added to the current approximation of the input and subtracted from the residual. Over time, this greedy procedure iteratively reduces the residual and improves the reconstruction, step by step (see Figure~\ref{fig:mp} and Algorithm~\ref{alg:mpsae}).

\textbf{Theoretical Properties of Matching Pursuit} \quad At each step, the residual \(\bm{r}^{(t)}\) is orthogonal to the selected atom \(\D_{j^{(t)}}\) (Proposition~\ref{prop:ortho}). Moreover,  the norm of the residual decreases monotonically at each iteration (Proposition~\ref{prop:monotonic}) and converges asymptotically to the component of the input orthogonal to the span of the dictionary \(\D\) (Proposition~\ref{prop:convergence}). 

Because each selected atom is subtracted from the residual at each iteration, the algorithm naturally explores new directions -- each selection is driven by what remains unexplained, pushing the model toward atoms that are less redundant and more complementary. This residual-driven mechanism improves diversity in the selected features and enhances robustness in the presence of dictionary coherence. Indeed, MP can achieve exponential convergence and accurately recover active components even when the dictionary is highly coherent, as long as it satisfies a block-incoherence condition, where correlations are localized within small groups of atoms~\cite{peotta2007matching}.

\section{Results}

We evaluate MP-SAE against four shallow SAEs: ReLU~\cite{elhage2022toymodelssuperposition, cunningham_sparse_2023}, JumpReLU~\cite{rajamanoharan2024jumping}, TopK~\cite{gao2025scaling}, and BatchTopK~\cite{bussmann2024batchtopk}, using the MNIST dataset. Additional results on large vision models, including expressivity and coherence, are provided in Appendix~\ref{sec:lvm}.

\textbf{Expressivity}\quad We assess reconstruction performance by varying two key parameters: the sparsity level \(k\) and the dictionary size \(p\). Figure~\ref{mnist_rec}(a) shows that, when fixing \(p = 1000\) and varying \(k\), MP-SAE is consistently more expressive across sparsity levels—despite having half the capacity of other SAEs due to the absence of encoder parameters from weight tying. As shown in Figure~\ref{mnist_rec}(b), when \(k = 10\) is fixed and \(p\) is swept, MP-SAE continues to improve as the dictionary size grows, while shallow SAEs plateau. This highlights the efficiency of MP-SAE in leveraging additional capacity under the same sparsity constraint.

\textbf{Learned Concepts}\quad In the following, we focus on dictionaries trained with \(k = 10\) and \(p = 1000\). The top row in Figure~\ref{fig:atoms} shows 25 atoms with the highest activation frequency for each SAE. All methods except ReLU appear to learn pen-stroke-like patterns, while ReLU learns atoms resembling full digits. The pen strokes learned by MP-SAE appear more precise. Interestingly, all shallow methods include a ``negative'' atom that closely resembles \(-\bm{b}_{\text{pre}}\), which is the most frequently activated atom in ReLU and JumpReLU.

When sorting atoms by their average activation value $\E(\z_j)$ rather than frequency $\sum_i \z^i_j \neq 0$, a structural shift unique to MP-SAE emerges. As shown in the bottom row of Figure~\ref{fig:atoms}, the most heavily weighted atoms in MP-SAE resemble clean, idealized digit prototypes, in contrast to the more detailed pen strokes observed among its most frequently activated atoms. In comparison, shallow SAEs exhibit little variation between these two rankings; ReLU continues to activate full-digit atoms, while the others still primarily activate stroke-like patterns.
\vspace{-0.3cm}

\begin{figure}[h]
    \centering
    \includegraphics[width=\linewidth]{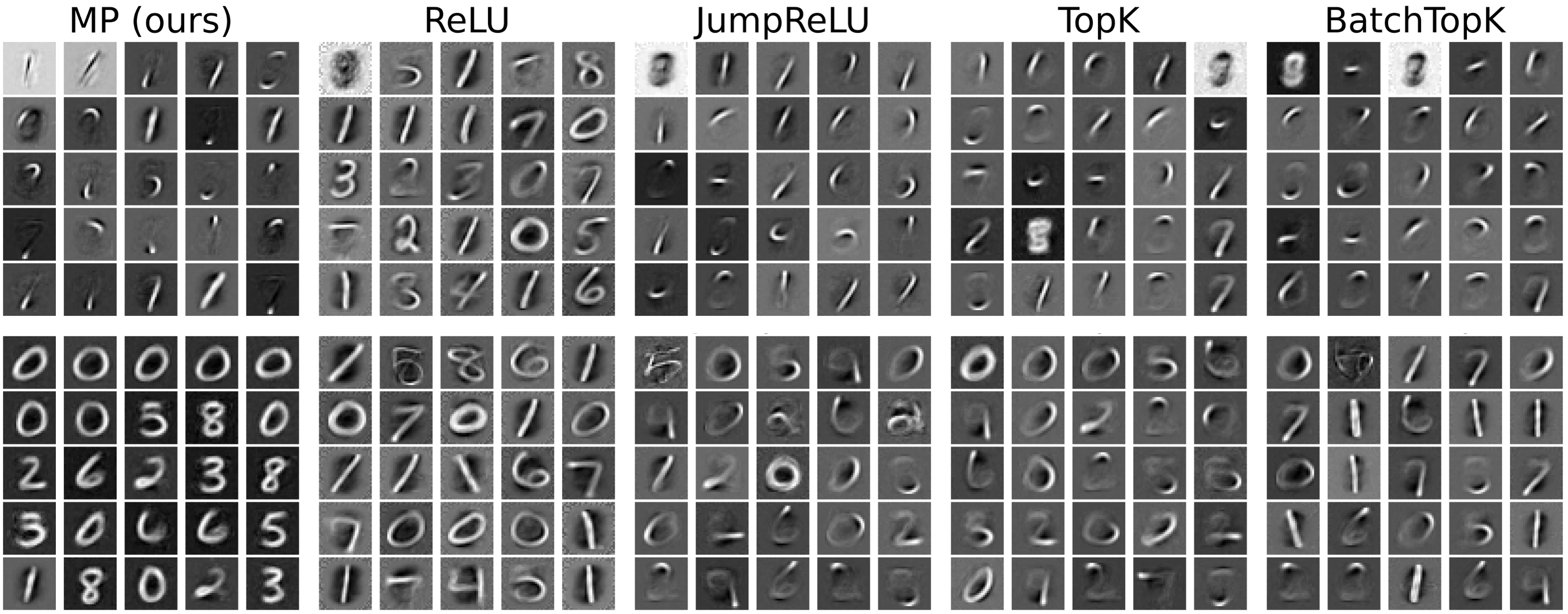} 
    \caption{\textbf{Feature Selection vs.\ Activation Levels}. Top: atoms with highest activation frequency ($\ell_0$). Bottom: atoms with highest activation $\E[\z_j]$ ($\ell_1$).}
    \label{fig:atoms}
    \vspace{-0.3cm}
\end{figure}

\textbf{Hints of Hierarchy}\quad To capture this shift, the distributions of activation frequency and average activation value $\mathbb{E}[\z_j]$ are shown in Figure~\ref{fig:activations}(a). JumpReLU, TopK, and BatchTopK exhibit high variance in activation frequency—some atoms are rarely used, while others are activated very frequently. In contrast, the variance in activation values remains low, with slightly higher values observed for the more frequently used atoms. By comparison, ReLU activates its dictionary atoms uniformly, both in frequency and magnitude. MP displays a perpendicular trend to the other \(\ell_0\)-based methods and JumpReLU: its atoms are activated with roughly equal frequency—similar to ReLU—but their activation values vary widely. Some atoms contribute much more to the reconstruction than others. A subtle inverse relationship is also observed: atoms with higher activation values tend to be used less frequently.

This hierarchical behavior is supported by Figure~\ref{fig:activations}(b), which shows that atoms with higher average activation values tend to be selected in the early layers of MP-SAE (the first iterations of Matching Pursuit). These atoms correspond to digit-like patterns in the bottom row of Figure~\ref{fig:atoms}, and are refined by later atoms capturing more localized pen strokes (top row). This progression suggests a hierarchical structure in MP-SAE, reconstructing from coarse to fine features (see Appendix~\ref{progressive_rec}).

To enable comparison with shallow encoders, atoms are reordered by activation value to simulate a sequential selection. \(\ell_0\)-based methods show a concentration of atoms toward the end, likely due to the auxiliary loss. This also highlights that ReLU exhibits an amplitude bias. 

\begin{figure}[H]
    \centering
    \includegraphics[width=0.75\linewidth]{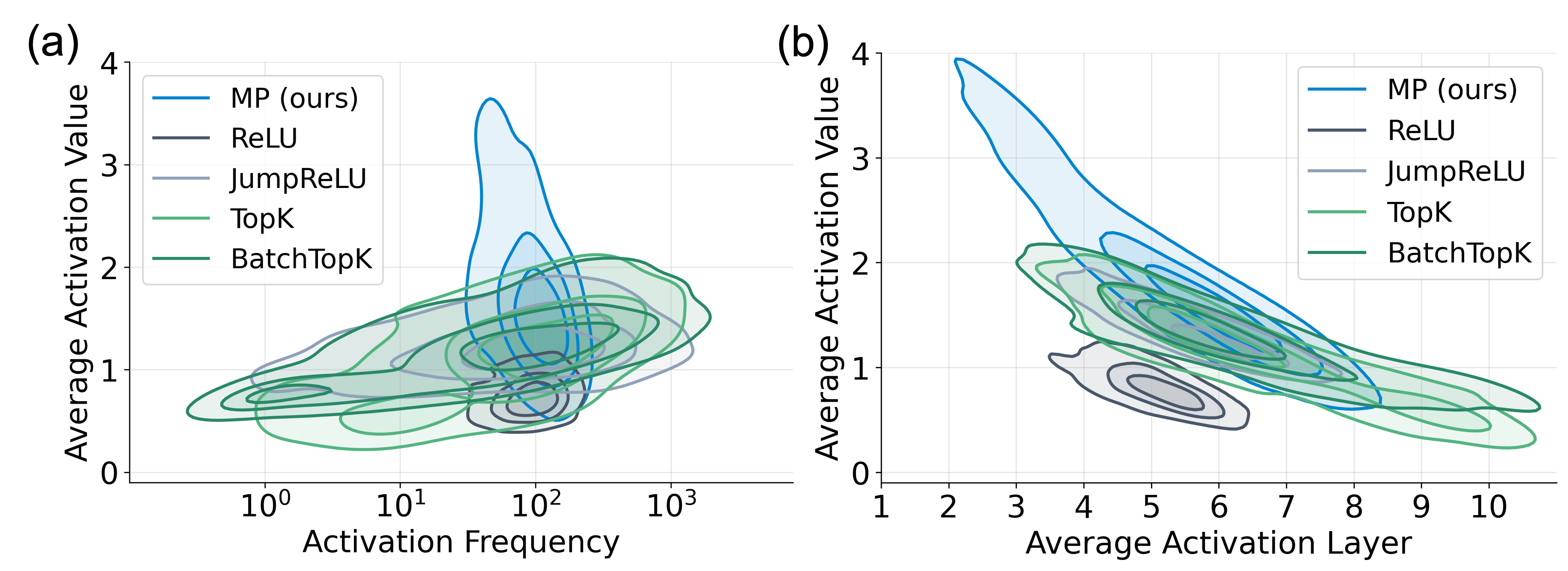}
    \vspace{-2mm}
    \caption{\textbf{Activation distributions}.}
    \label{fig:activations}
    \vspace{-3mm}
\end{figure}

\textbf{Coherence}\quad Finally, we assess the coherence of both the learned dictionary and the atoms selected at inference time using the Babel score~\cite{tropp_greed_2004}. Unlike mutual coherence, which captures only the maximum pairwise similarity between atoms, the Babel function provides a more comprehensive measure of redundancy by quantifying cumulative coherence. Given a dictionary $\D = [\D_1, \dots, \D_p] \in \mathbb{R}^{m \times p}$ with unit-norm columns, the Babel function of order $r$ is defined as:
\begin{equation*}
    \mu_1(r) = \max_{\substack{S \subset [p], |S| = r}} \left( \max_{j \notin S} \sum_{i \in S} \left| \D_i^\top \D_j \right| \right)
\end{equation*}
The value $\mu_1(r)$ reflects how well a single atom can be approximated by a group of $r$ others—lower values indicate lower redundancy. Further details are provided in Appendix~\ref{sec:lvm}.

\begin{figure}[H]
    \centering
    \includegraphics[width=0.75\linewidth]{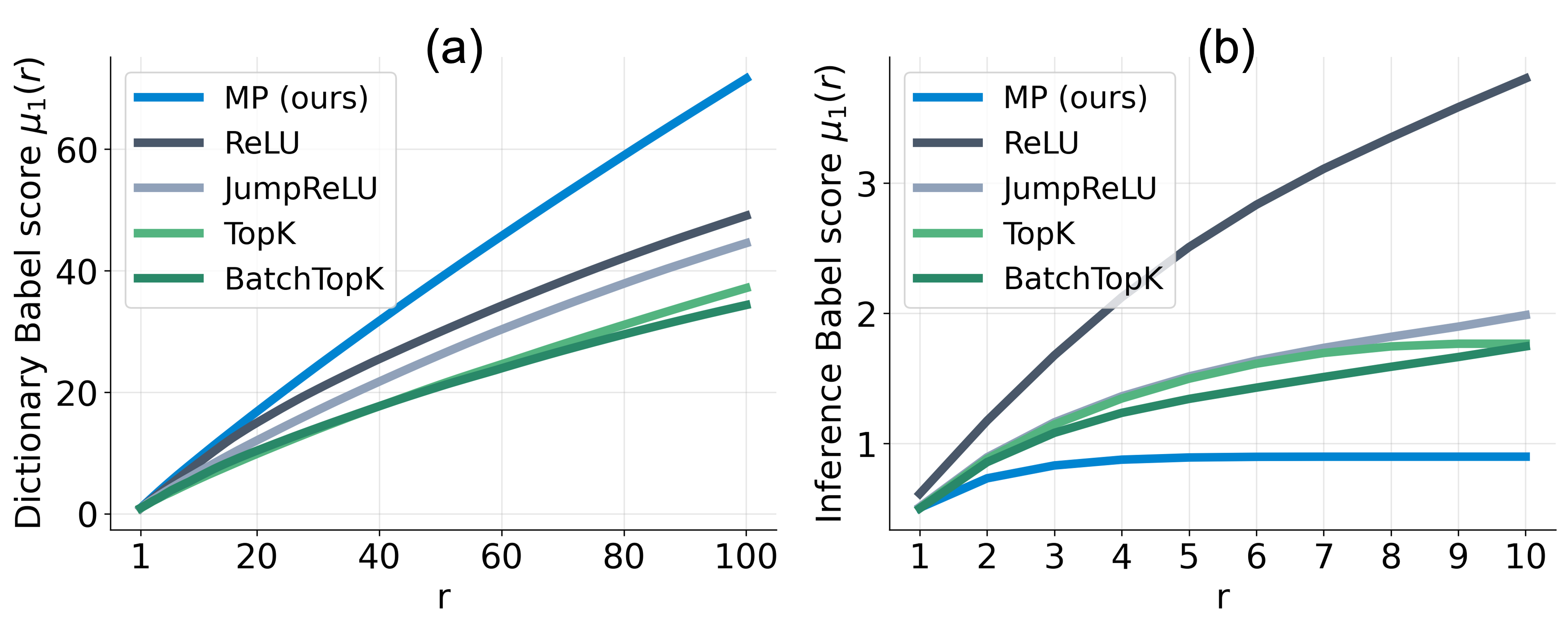}
    \vspace{-2mm}
    \caption{\textbf{Coherence analysis of learned concepts.}}
    \label{fig:babel}
    \vspace{-3mm}
\end{figure}
Figure~\ref{fig:babel}(a) shows that MP-SAE exhibits a more coherent dictionary than the shallow SAEs. However, as shown in Figure~\ref{fig:babel}(b), it selects more incoherent atoms at inference. This highlights MP's ability to draw incoherent subsets from a globally coherent dictionary. Interestingly, for shallow SAEs, the trends for the learned dictionary and the selected atoms align: ReLU consistently exhibits the highest coherence, while TopK remains the least coherent. This suggests that shallow SAEs are constrained to select more correlated atoms when the dictionary itself is more coherent.

\section{Conclusion}

We introduce MP-SAE and show through small-scale experiments on MNIST that it improves expressivity, learns hierarchical features, and overcomes coherence limitations of shallow SAEs. These experiments also reveal distinct representation behaviors across SAEs and offer a foundation for building more interpretable sparse autoencoders.

\newpage

\bibliography{main}
\bibliographystyle{icml2025}

\newpage

\section{Sequential Reconstruction on MNIST}\label{progressive_rec}

Figure~\ref{fig:prog} illustrates how each model reconstructs the input step by step, starting from the pre-activation bias \(\smlb_{\text{pre}}\). For shallow SAEs, atoms are reordered by their activation values in the sparse code \(\z\) to simulate a sequential inference process. Note that for ReLU, JumpReLU, and BatchTopK, the number of selected atoms may differ from \(k = 10\), as these methods do not enforce a fixed sparsity level.

MP-SAE exhibits a clear coarse-to-fine reconstruction pattern: with just two atoms, the model already recovers the input's global structure—a zero with an internal pen stroke. Subsequent atoms progressively refine the digit’s contour using precise pen-stroke components, highlighting the hierarchical behavior of MP-SAE.

In contrast, ReLU fails to recover the inner stroke, likely because its dictionary contains few atoms resembling pen strokes and is dominated by full-digit prototypes. JumpReLU, TopK, and BatchTopK reconstruct the digit by combining multiple pen-stroke atoms, both for the outer zero shape and the internal stroke, relying on distributed, part-based representations.

\begin{figure}[H]
    \centering
    \includegraphics[width=\linewidth]{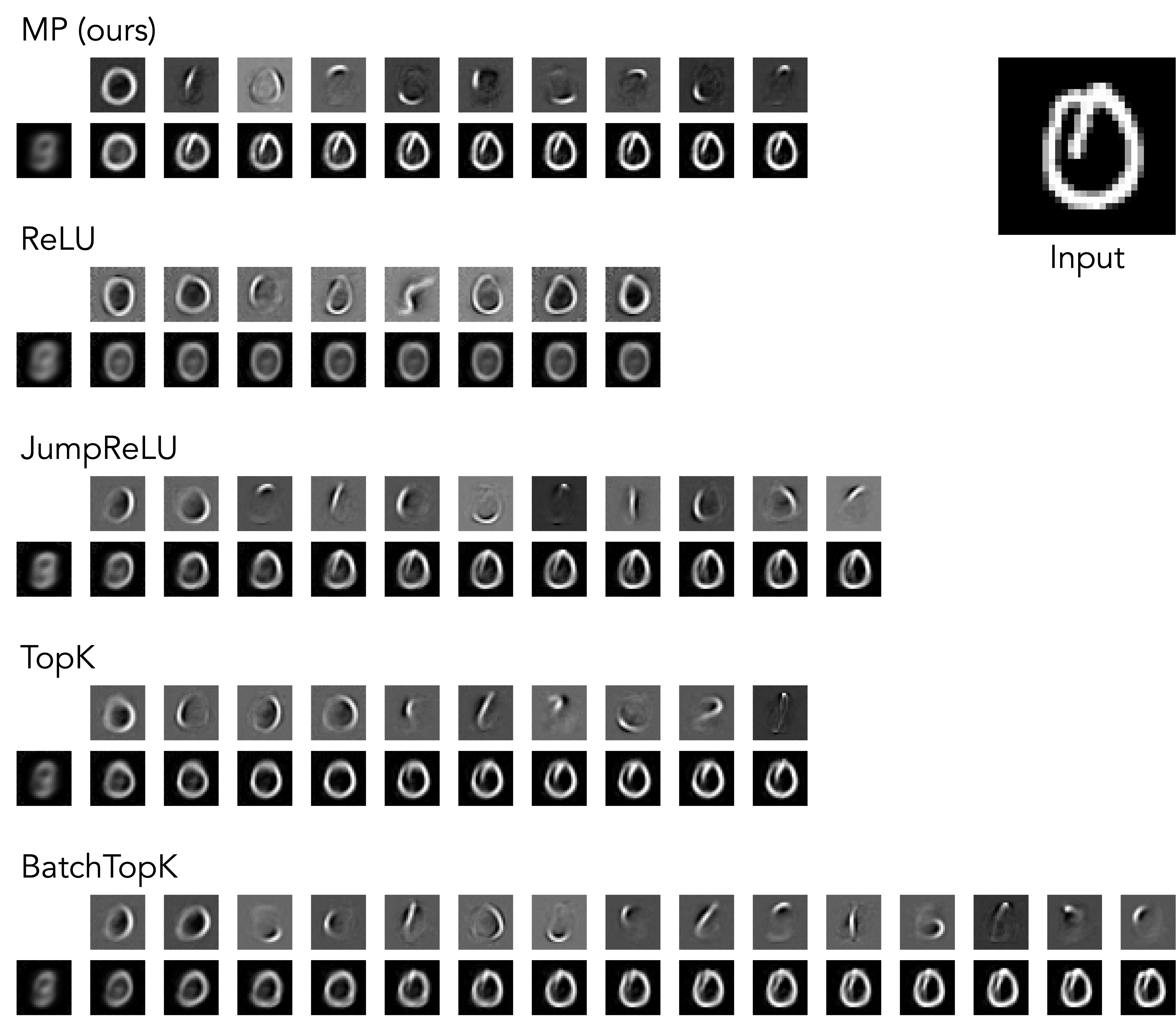}
    \caption{Example of Sequential Reconstruction for \(k = 10\).}
    \label{fig:prog}
\end{figure}

\section{Results on Large Vision Models}\label{sec:lvm}

We evaluate MP-SAE on large vision models and compare it to three shallow SAEs: Vanilla (ReLU), TopK, and BatchTopK. Our results show that the findings observed on MNIST generalize to this setting.

\textbf{Expressivity} \quad We first assess the representational expressivity of MP-SAE relative to standard SAEs. Figure~\ref{fig:expressivity} presents the Pareto frontier obtained by varying the sparsity level while keeping the dictionary size $p$ fixed. Across all evaluated models—SigLIP~\cite{zhai2023sigmoid}, DINOv2~\cite{oquab2023dinov2}, CLIP~\cite{radford2021learning}, and ViT~\cite{dosovitskiy2020image}—MP-SAE consistently achieves higher $R^2$ values at similar sparsity levels, indicating more efficient reconstructions.

Training was conducted for 50 epochs using the Adam optimizer, with an initial learning rate of $5\cdot10^{-4}$ decayed to $10^{-6}$ via cosine annealing with warmup. All SAEs used an expansion factor of 25 ($p = 25m$). Models were trained on the ImageNet-1k~\cite{imagenet2009} training set, using frozen features from the final layer of each backbone. For ViT-style models (e.g., DINOv2), we included both the CLS token and all spatial tokens (approximately 261 tokens per image for DINOv2), resulting in roughly 25 billion training tokens overall.
\vspace{-2mm}
\begin{figure}[H]
\centering
\includegraphics[width=\linewidth]{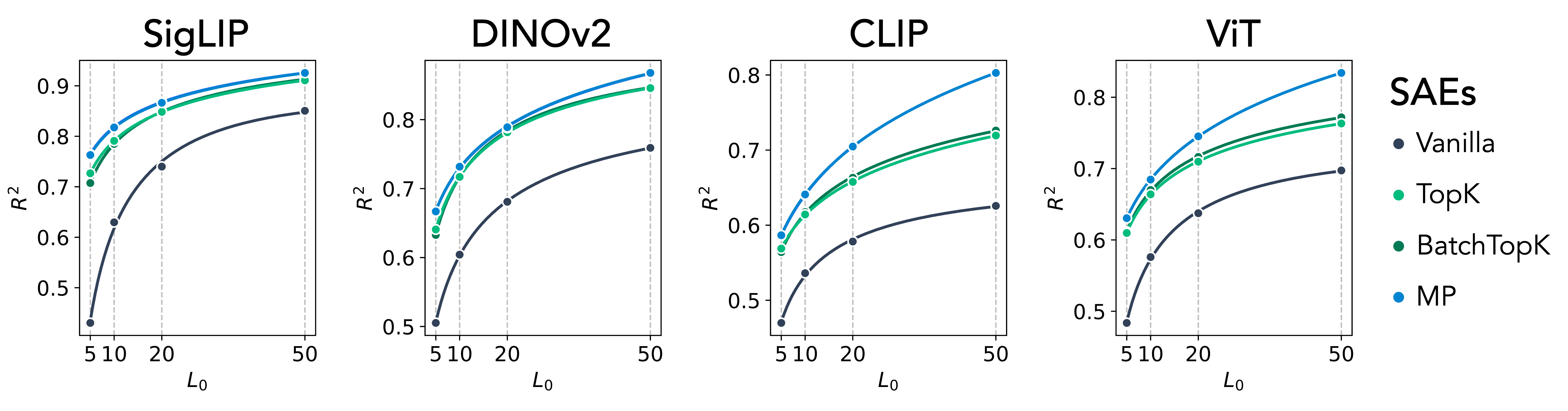}
\vspace{-8mm}
\caption{\textbf{MP-SAE recovers more expressive atoms than standard SAEs.}
Reconstruction performance ($R^2$) as a function of sparsity level across four pretrained vision models: SigLIP, DINOv2, CLIP, and ViT. MP-SAE consistently yields higher $R^2$ at comparable sparsity, suggesting more informative and efficient decompositions.
}
\label{fig:expressivity}
\end{figure}
\vspace{-4mm}
\textbf{Coherence}\quad To evaluate coherence beyond pairwise similarity, we use the Babel function~\cite{tropp_greed_2004}, a standard metric in sparse approximation that captures cumulative interference among dictionary atoms. Mutual coherence reflects only the maximum absolute inner product between pairs of atoms; as a result, it can be misleadingly high if just one pair has a large inner product while the rest are small. In contrast, the Babel function measures the total interference between an atom and a group of others, offering a more comprehensive view of redundancy. Intuitively, $\mu_1(r)$ quantifies how well a single atom can be approximated by a group of $r$ others. Lower values indicate lower redundancy, reflecting the degree of overlap between an atom and its $r$ nearest neighbors in the dictionary.

Figure~\ref{fig:babel_lvm} reports $\mu_1(r)$ for both the full dictionary (top) and for subsets of atoms co-activated at inference (bottom). As observed on MNIST, MP-SAE learns globally coherent dictionaries with low Babel scores. However, the subsets of atoms selected during inference exhibit higher Babel values, indicating local incoherence. This duality in coherence persists even when training on large vision models.
\vspace{-4mm}
\begin{figure}[H]
\centering
\includegraphics[width=0.95\linewidth]{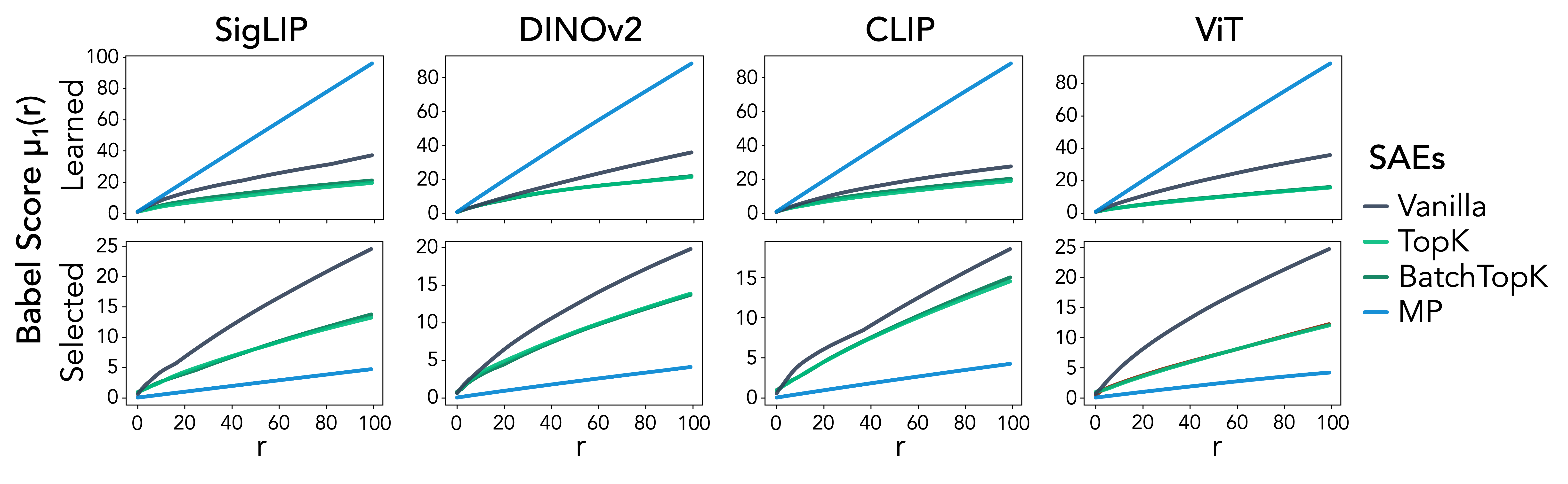}
\vspace{-4mm}
\caption{\textbf{MP-SAE learns more coherent dictionaries but selects incoherent atoms.}
Babel scores for the full dictionaries (top) and co-activated subsets at inference time (bottom).
}
\label{fig:babel_lvm}
\end{figure}

\section{Theoretical Properties of Matching Pursuit}\label{app:ortho_proof}

We restate three foundational properties of Matching Pursuit—originally established in the sparse coding literature~\cite{MallatMP}—and interpret them in the context of sparse autoencoders. These properties help elucidate the structure and dynamics of the representations learned by MP-SAE.

\begin{itemize}
    \item \textbf{Stepwise orthogonality} (Proposition~\ref{prop:ortho}): at each iteration, the residual becomes orthogonal to the atom most recently selected by the greedy inference rule. This sequential orthogonalization mechanism gives rise to a locally disentangled structure in the representation and reflects the conditional independence induced by MP-SAE inference.
    \item \textbf{Monotonic decrease of residual energy} (Proposition~\ref{prop:monotonic}): the $\ell_2$ norm of the residual decreases whenever it retains a nonzero projection onto the span of the dictionary. This guarantees that inference steps lead to progressively refined reconstructions, and enables sparsity to be adaptively tuned at inference time without retraining.
    \item \textbf{Asymptotic convergence} (Proposition~\ref{prop:convergence}): in the limit of infinite inference steps, the reconstruction converges to the orthogonal projection of the input onto the subspace defined by the dictionary. Thus, MP-SAE asymptotically recovers all structure that is representable within its learned basis.\\
\end{itemize}

\begin{proposition}[Stepwise Orthogonality of MP Residuals]\label{prop:ortho}
Let $\res^{(t)}$ denote the residual at iteration $t$ of MP-SAE inference, and let $j^{(t)}$ be the index of the atom selected at step $t$. If the column $j^{(t)}$ of the dictionary $\D$ satisfy $\|\D_{j^{(t)}}\|_2 = 1$, then the residual becomes orthogonal to the previously selected atom:
\[
\D_{j^{(t)}}^\top \res^{(t)} = 0.
\]
\end{proposition}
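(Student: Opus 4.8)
The plan is to prove this by direct substitution of the MP-SAE update rules from Algorithm~\ref{alg:mpsae}, since the statement is essentially an algebraic identity once the residual update is expanded. First I would recall the two relevant lines of the algorithm: the coefficient computed at step $t$ is $\z^{(t)}_{j^{(t)}} = \D_{j^{(t)}}^\top \res^{(t-1)}$, and the residual is updated as $\res^{(t)} = \res^{(t-1)} - \z^{(t)}_{j^{(t)}} \D_{j^{(t)}}$. These are the only ingredients needed.

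Next I would compute the inner product of the selected atom with the new residual by substituting the residual update:
\[
\D_{j^{(t)}}^\top \res^{(t)} = \D_{j^{(t)}}^\top \res^{(t-1)} - \z^{(t)}_{j^{(t)}} \, \D_{j^{(t)}}^\top \D_{j^{(t)}}.
\]
Then I would replace $\D_{j^{(t)}}^\top \D_{j^{(t)}} = \|\D_{j^{(t)}}\|_2^2 = 1$ using the unit-norm hypothesis, and recognize that $\D_{j^{(t)}}^\top \res^{(t-1)}$ is exactly $\z^{(t)}_{j^{(t)}}$ by the coefficient rule. The right-hand side collapses to $\z^{(t)}_{j^{(t)}} - \z^{(t)}_{j^{(t)}} = 0$, which is the claim. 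Geometrically, this is just the statement that subtracting the orthogonal projection of $\res^{(t-1)}$ onto the line spanned by $\D_{j^{(t)}}$ leaves a residual orthogonal to that line; I would add one sentence making this interpretation explicit, to connect it to the "locally disentangled structure" remark preceding the proposition.

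There is no substantive obstacle here: the only hypothesis that does real work is $\|\D_{j^{(t)}}\|_2 = 1$, and the result fails (up to a rescaling of the coefficient) without it, so the one thing to be careful about is to flag where normalization is used. I would also note in passing that the argmax selection rule for $j^{(t)}$ plays no role in this particular proposition — orthogonality of the new residual to the just-selected atom holds for \emph{any} choice of index, with the greedy rule mattering only for the monotonic-decrease and convergence statements (Propositions~\ref{prop:monotonic} and~\ref{prop:convergence}). This keeps the proof self-contained and makes clear that Proposition~\ref{prop:ortho} is purely a consequence of the projection step.
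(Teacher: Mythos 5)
Your proof is correct and takes essentially the same route as the paper: substitute the residual update, take the inner product with $\D_{j^{(t)}}$, use $\|\D_{j^{(t)}}\|_2^2 = 1$, and cancel. The extra observations you add (the geometric projection reading and the remark that the argmax rule is irrelevant here) are accurate but not part of the paper's argument.
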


\begin{proof}
This follows from the residual update:
\[
\res^{(t)} = \res^{(t-1)} - \D_{j^{(t)}} \z_{j^{(t)}}^{(t)},
\]
with $\z_{j^{(t)}}^{(t)} = \D_{j^{(t)}}^\top \res^{(t-1)}$. Taking the inner product with $\D_{j^{(t)}}$ gives:
\[
\D_{j^{(t)}}^\top \res^{(t)} = \D_{j^{(t)}}^\top \res^{(t-1)} - \|\D_{j^{(t)}}\|^2 \z_{j^{(t)}}^{(t)}  = \z_{j^{(t)}}^{(t)}  - \z_{j^{(t)}}^{(t)} = 0. \qedhere
\]
\end{proof}

This result captures the essential inductive step of Matching Pursuit: each update removes variance along the most recently selected atom, producing a residual that is orthogonal to it. Applied iteratively, this localized orthogonality promotes the emergence of conditionally disentangled structure in MP-SAE. In contrast, other sparse autoencoders lack this stepwise orthogonality mechanism, which helps explain the trend observed in the Babel function during inference in Figure~\ref{fig:babel}.

\begin{proposition}[Monotonic Decrease of MP Residuals]\label{prop:monotonic}
Let $\res^{(t)}$ denote the residual at iteration $t$ of MP-SAE inference, and let $\z_{j^{(t)}}^{(t)}$ be the nonzero coefficient selected at that step, Then the squared residual norm decreases monotonically:
\[
\|\res^{(t)}\|_2^2 - \|\res^{(t-1)}\|_2^2 = -\|\D_{j^{(t)}} \z_{j^{(t)}}^{(t)}\|_2^2 \leq 0.
\]
\end{proposition}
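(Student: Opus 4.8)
The plan is to expand $\|\res^{(t)}\|_2^2$ directly using the residual update rule $\res^{(t)} = \res^{(t-1)} - \D_{j^{(t)}}\z_{j^{(t)}}^{(t)}$, just as in the proof of Proposition~\ref{prop:ortho}. Writing out the squared norm of a difference gives
\[
\|\res^{(t)}\|_2^2 = \|\res^{(t-1)}\|_2^2 - 2\,\z_{j^{(t)}}^{(t)}\,\D_{j^{(t)}}^\top \res^{(t-1)} + \|\D_{j^{(t)}}\z_{j^{(t)}}^{(t)}\|_2^2.
\]
The key observation is that the cross term is exactly twice the last term: since $\z_{j^{(t)}}^{(t)} = \D_{j^{(t)}}^\top\res^{(t-1)}$ and $\|\D_{j^{(t)}}\|_2=1$, we have $\z_{j^{(t)}}^{(t)}\,\D_{j^{(t)}}^\top\res^{(t-1)} = (\z_{j^{(t)}}^{(t)})^2 = \|\D_{j^{(t)}}\z_{j^{(t)}}^{(t)}\|_2^2$. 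Substituting, the $-2(\cdots)+(\cdots)$ collapses to $-\|\D_{j^{(t)}}\z_{j^{(t)}}^{(t)}\|_2^2$, which is the claimed identity, and non-positivity is immediate since it is a negative squared norm.

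An alternative, perhaps cleaner, route is to invoke Proposition~\ref{prop:ortho}: the update decomposes $\res^{(t-1)} = \res^{(t)} + \D_{j^{(t)}}\z_{j^{(t)}}^{(t)}$ as an orthogonal sum, because $\D_{j^{(t)}}^\top\res^{(t)} = 0$ by stepwise orthogonality. Then the Pythagorean theorem gives $\|\res^{(t-1)}\|_2^2 = \|\res^{(t)}\|_2^2 + \|\D_{j^{(t)}}\z_{j^{(t)}}^{(t)}\|_2^2$, which rearranges to the stated equation. I would likely present this version since the orthogonality proposition is already in hand and it makes the geometric content transparent.

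There is essentially no real obstacle here; the only thing to be careful about is the bookkeeping with $\|\D_{j^{(t)}}\|_2=1$ when equating $\z_{j^{(t)}}^{(t)}\D_{j^{(t)}}^\top\res^{(t-1)}$ with $\|\D_{j^{(t)}}\z_{j^{(t)}}^{(t)}\|_2^2$ — one should state the unit-norm assumption explicitly (it is already a standing assumption on $\D$ and appears in Proposition~\ref{prop:ortho}). If one wanted to be pedantic, note $\|\D_{j^{(t)}}\z_{j^{(t)}}^{(t)}\|_2^2 = (\z_{j^{(t)}}^{(t)})^2\|\D_{j^{(t)}}\|_2^2 = (\z_{j^{(t)}}^{(t)})^2 \geq 0$, so the right-hand side is manifestly $\leq 0$, and it is $0$ precisely when $\z_{j^{(t)}}^{(t)} = 0$, i.e.\ when the residual is orthogonal to every atom — the stopping condition that sets up Proposition~\ref{prop:convergence}.
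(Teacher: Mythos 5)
Your proposal is correct, and your preferred route — rearranging the update to $\res^{(t-1)} = \res^{(t)} + \D_{j^{(t)}}\z_{j^{(t)}}^{(t)}$, invoking Proposition~\ref{prop:ortho} to kill the cross term, and applying Pythagoras — is exactly the paper's proof. Your first, direct-expansion variant is an equivalent algebraic rewording (it re-derives the orthogonality inline rather than citing it), so there is no substantive difference; one small bonus of your write-up is the explicit remark that $\|\D_{j^{(t)}}\z_{j^{(t)}}^{(t)}\|_2^2 = (\z_{j^{(t)}}^{(t)})^2$ under the unit-norm convention and the observation that equality holds exactly when $\z_{j^{(t)}}^{(t)}=0$.
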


\begin{proof}
From the residual update:
\[
\res^{(t)} = \res^{(t-1)} - \D_{j^{(t)}} \z_{j^{(t)}}^{(t)},
\]
we can rearrange to write:
\[
\res^{(t-1)} = \res^{(t)} + \D_{j^{(t)}} \z_{j^{(t)}}^{(t)}.
\]
Taking the squared norm of both sides:
\begin{align*}
\|\res^{(t-1)}\|_2^2 &= \|\res^{(t)} + \D_{j^{(t)}} \z_{j^{(t)}}^{(t)}\|_2^2 \\
&= \|\res^{(t)}\|_2^2 + 2 \langle \res^{(t+1)}, \D_{j^{(t)}} \rangle \z_{j^{(t)}}^{(t)} + \|\D_{j^{(t)}} \z_{j^{(t)}}^{(t)}\|_2^2.
\end{align*}
By Proposition~\ref{prop:ortho}, the cross term vanishes:
\[
\langle \res^{(t)}, \D_{j^{(t)}} \rangle = 0,
\]
yielding:
\[
\|\res^{(t-1)}\|_2^2 = \|\res^{(t)}\|_2^2 + \|\D_{j^{(t)}} \z_{j^{(t)}}^{(t)}\|_2^2. \qedhere
\]
\end{proof}

The monotonic decay of residual energy ensures that each inference step yields an improvement in reconstruction, as long as the residual lies within the span of the dictionary. Crucially, this property enables MP-SAE to support adaptive inference-time sparsity: the number of inference steps can be varied at test time—independently of the training setup—while still allowing the model to progressively refine its approximation. 

\begin{wrapfigure}[15]{r}{0.45\linewidth}
    \centering
    \includegraphics[width=1\linewidth]{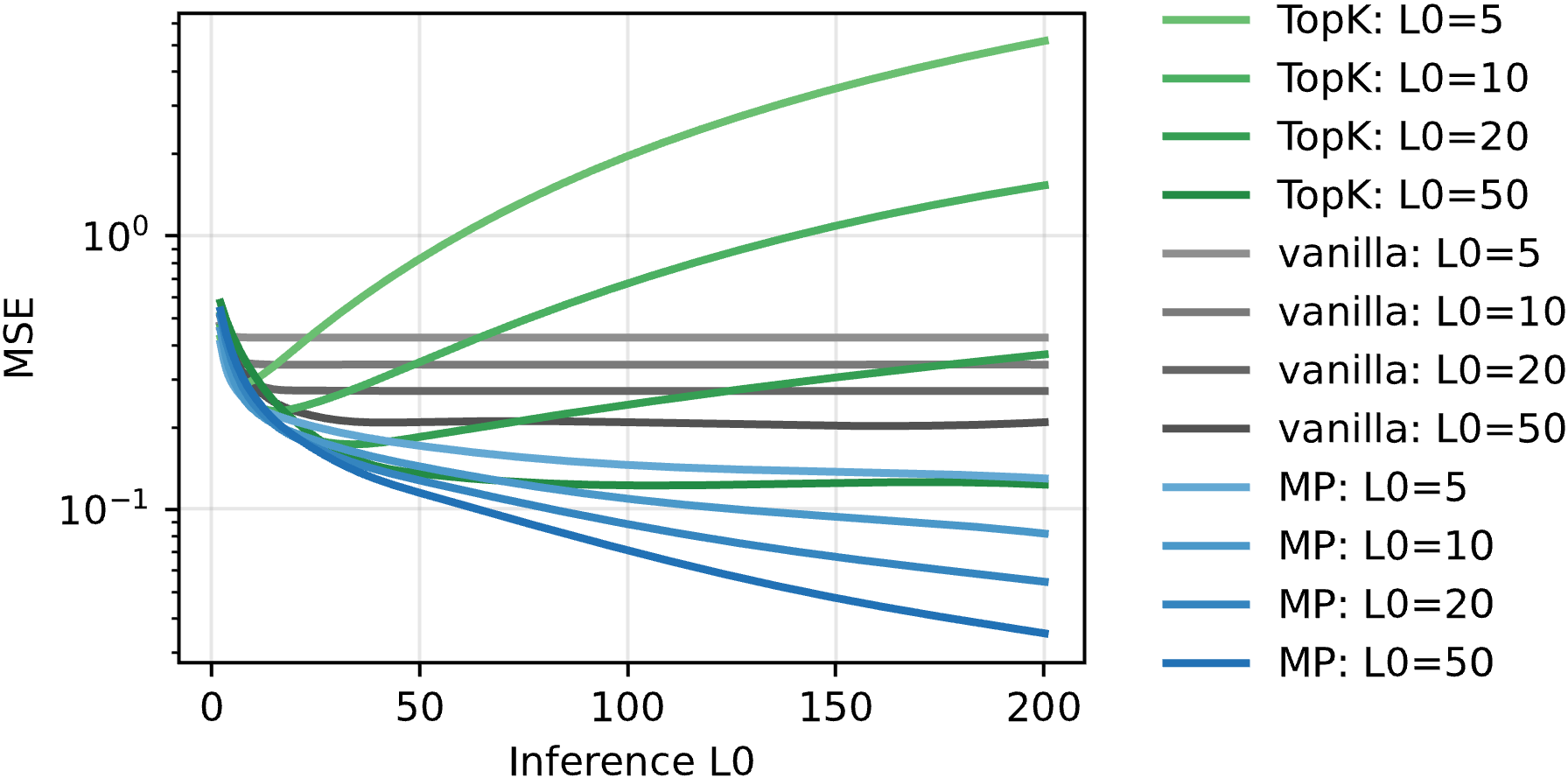}
    \caption{\textbf{Reconstruction error vs.\ inference-time sparsity \(k\).}}
    \label{fig:inference_at_k}
    \vspace{-10pt}
\end{wrapfigure}

As shown in Figure~\ref{fig:inference_at_k}, MP-SAE exhibits a continuous decay in reconstruction error—a behavior explained by the proposition and not guaranteed by other sparse autoencoders. All models were trained on DINOv2 representations with different training-time \(\ell_0\) sparsity levels. At inference, the sparsity \(k\) is varied to assess generalization. MP-SAE shows monotonic improvement, as guaranteed by Proposition~\ref{prop:monotonic}. This stands in contrast to TopK-based SAEs, which often degrade under sparsity mismatch: when trained with fixed \(k\), the decoder implicitly specializes to superpositions of exactly \(k\) features, leading to instability—particularly when the inference-time \(k\) is much larger than the training value. ReLU-based SAEs, by contrast, cannot expand their support beyond the features activated during training and thus exhibit flat or plateaued performance as \(k\) increases.

\begin{proposition}[Asymptotic Convergence of MP Residuals]\label{prop:convergence}
Let $\hat{\x}^{(t)} = \x - \res^{(t)}$ denote the reconstruction at iteration $t$, and let $\mathbf{P}_{\D}$ be the orthogonal projector onto $\mathrm{span}(\D)$. Then:
\[
\lim_{t \to \infty} \| \hat{\x}^{(t)} - \mathbf{P}_{\D} \x \|_2 = \lim_{t \to \infty} \| \mathbf{P}_{\D} \res^{(t)} \|_2 = 0.
\]
\end{proposition}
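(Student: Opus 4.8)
The plan is to chain Propositions~\ref{prop:ortho} and~\ref{prop:monotonic} together with a finite-dimensional norm-equivalence argument. I would begin by disposing of the first equality in the statement. Since $\hat{\x}^{(t)} = \smlb_{\text{pre}} + \sum_{s=1}^{t}\z^{(s)}_{j^{(s)}}\D_{j^{(s)}}$, every term added to the reconstruction lies in $\mathrm{span}(\D)$; assuming $\smlb_{\text{pre}}\in\mathrm{span}(\D)$ (e.g.\ $\smlb_{\text{pre}}=\zero$, which is the setting of the classical statement, or otherwise absorbing $\smlb_{\text{pre}}$ into $\x$), we get $\hat{\x}^{(t)}\in\mathrm{span}(\D)$, hence $\hat{\x}^{(t)} - \mathbf{P}_{\D}\x = \mathbf{P}_{\D}(\hat{\x}^{(t)} - \x) = -\mathbf{P}_{\D}\res^{(t)}$, so $\|\hat{\x}^{(t)} - \mathbf{P}_{\D}\x\|_2 = \|\mathbf{P}_{\D}\res^{(t)}\|_2$ for every $t$. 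It then suffices to prove $\|\mathbf{P}_{\D}\res^{(t)}\|_2 \to 0$.

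Next I would extract a vanishing-correlation statement from Proposition~\ref{prop:monotonic}. The sequence $\|\res^{(t)}\|_2^2$ is non-increasing and bounded below by $0$, hence convergent; telescoping the per-step decrements $\|\res^{(t-1)}\|_2^2 - \|\res^{(t)}\|_2^2 = \|\D_{j^{(t)}}\z^{(t)}_{j^{(t)}}\|_2^2 = (\z^{(t)}_{j^{(t)}})^2$ (using $\|\D_{j^{(t)}}\|_2=1$) shows $\sum_{t\ge 1}(\z^{(t)}_{j^{(t)}})^2 < \infty$, so $\z^{(t)}_{j^{(t)}} = \D_{j^{(t)}}^\top\res^{(t-1)} \to 0$. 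Reading the selection rule of Algorithm~\ref{alg:mpsae} with absolute value (as in the classical MP on which the statement is modeled), $|\D_{j^{(t)}}^\top\res^{(t-1)}| = \max_{j}|\D_j^\top\res^{(t-1)}| = \|\D^\top\res^{(t-1)}\|_\infty$, so $\|\D^\top\res^{(t-1)}\|_\infty \to 0$; and since $\res^{(t)}$ differs from $\res^{(t-1)}$ by the vanishing vector $\z^{(t)}_{j^{(t)}}\D_{j^{(t)}}$ and $\D^\top$ is bounded, also $\|\D^\top\res^{(t)}\|_\infty \to 0$.

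Finally — the crux — I would convert control of $\|\D^\top\res^{(t)}\|_\infty$ into control of $\|\mathbf{P}_{\D}\res^{(t)}\|_2$. Because $(\eye-\mathbf{P}_{\D})\res^{(t)}$ is orthogonal to every column of $\D$, we have $\D^\top\res^{(t)} = \D^\top\mathbf{P}_{\D}\res^{(t)}$, so it is enough to show that $v\mapsto\|\D^\top v\|_\infty$ is a norm on the finite-dimensional subspace $\mathrm{span}(\D)$: it is plainly a seminorm, and it is definite there since $\D^\top v = \zero$ together with $v\in\mathrm{span}(\D)$ forces $v\perp\mathrm{span}(\D)$, hence $v=\zero$. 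By equivalence of norms on a finite-dimensional space there is $C>0$ with $\|v\|_2 \le C\,\|\D^\top v\|_\infty$ for all $v\in\mathrm{span}(\D)$; applying this to $v=\mathbf{P}_{\D}\res^{(t)}$ gives $\|\mathbf{P}_{\D}\res^{(t)}\|_2 \le C\,\|\D^\top\res^{(t)}\|_\infty \to 0$, completing the proof.

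I expect this last step to be the main obstacle conceptually: in the original infinite-dimensional Hilbert-space treatment of Mallat--Zhang, $\|\D^\top\res^{(t)}\|_\infty\to 0$ does \emph{not} by itself yield $\mathbf{P}_{\D}\res^{(t)}\to 0$, and one needs a weak-convergence/compactness argument using totality of the atom set. Here finiteness of $p$ and $m$ makes the norm-equivalence shortcut legitimate, but the write-up should make that finiteness explicit (so that $\|\D^\top v\|_\infty$ is a genuine attained maximum), and should flag two bookkeeping points: the role of $\smlb_{\text{pre}}\in\mathrm{span}(\D)$ noted above, and that the displayed $\argmax$ in Algorithm~\ref{alg:mpsae} must be read with absolute value (otherwise MP can stall on the nonnegative cone and the limit is the projection onto the conic hull of $\{\D_j\}$ rather than onto $\mathrm{span}(\D)$).
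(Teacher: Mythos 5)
Your proof is correct, but it follows a genuinely different route from the paper's, because the paper gives no proof of its own: it defers entirely to Theorem~1 of Mallat and Zhang~\cite{MallatMP}. That theorem lives in a (possibly infinite-dimensional) Hilbert space with a total dictionary, and there the argument is necessarily more delicate, for precisely the reason you flag at the end: in that setting $\|\D^\top\res^{(t)}\|_\infty\to 0$ does \emph{not} by itself imply $\mathbf{P}_{\D}\res^{(t)}\to\zero$, and Mallat--Zhang go through a weak-compactness/subsequence argument. Your chain --- monotonicity from Proposition~\ref{prop:monotonic}, telescoping to get $\sum_{t}(\z^{(t)}_{j^{(t)}})^2<\infty$ and hence $\|\D^\top\res^{(t)}\|_\infty\to 0$, then the norm-equivalence bound $\|v\|_2\le C\,\|\D^\top v\|_\infty$ on the finite-dimensional subspace $\mathrm{span}(\D)$ --- is cleaner and more elementary, and it is exactly the right proof for MP-SAE, where $m$ and $p$ are finite; this is a real simplification that the paper's citation does not make visible. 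Both caveats you raise are genuine and are in fact required for the proposition to hold as written. If $\smlb_{\text{pre}}\notin\mathrm{span}(\D)$ then $\hat{\x}^{(t)}-\mathbf{P}_{\D}\x$ converges to the nonzero constant $(\eye-\mathbf{P}_{\D})\smlb_{\text{pre}}$, not to $\zero$. And if the selection rule in Algorithm~\ref{alg:mpsae} is read literally without an absolute value, the iteration can stall permanently: with $\D$ the standard basis of $\R^2$, $\smlb_{\text{pre}}=\zero$, and $\x=(0,-1)^\top$, the selected coefficient is $0$ at every step and $\mathbf{P}_{\D}\res^{(t)}\equiv\x\ne\zero$. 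The paper's appeal to Mallat--Zhang implicitly assumes the classical $|\cdot|$-argmax and a bias lying in $\mathrm{span}(\D)$; making those hypotheses explicit, as you do, is an improvement in precision over the paper's treatment.
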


This convergence result is formally established in the original Matching Pursuit paper ~\cite{MallatMP}[Theorem 1]. This result implies that MP-SAE progressively reconstructs the component of $\x$ that lies within the span of the dictionary, converging to its orthogonal projection in the limit of infinite inference steps. When the dictionary is complete (i.e., $\mathrm{rank}(\D) = m$), this guarantees convergence to the input signal $\x$.

\end{document}